\newtheorem{theorem}{Theorem}[]
\newtheorem{lemma}[]{Lemma}
\newtheorem{proposition}{Proposition}[]
\newtheorem{assumption}{Assumption}
\newtheorem{remark}{Remark}
\begin{document}
\title{A Structure-aware Online Learning Algorithm for Markov Decision Processes}
%
%
%

\author{Arghyadip Roy}
\affiliation{%
  \institution{Dept. of Electrical Engineering, IIT Bombay}
  \city{}
}
\email{arghyadip@ee.iitb.ac.in}

\author{Vivek Borkar}
\affiliation{%
  \institution{Dept. of Electrical Engineering, IIT Bombay}
  \city{}
}
\email{borkar@ee.iitb.ac.in}

\author{Abhay Karandikar}
\affiliation{%
  \institution{Dept. of Electrical Engineering, IIT Bombay}
  \institution{Director and Professor, IIT Kanpur}
  \city{}
}
\email{karandi@ee.iitb.ac.in,karandi@iitk.ac.in}

\author{Prasanna Chaporkar}
\affiliation{%
  \institution{Dept. of Electrical Engineering, IIT Bombay}
}
\email{chaporkar@ee.iitb.ac.in}

\renewcommand{\shortauthors}{A. Roy et al.}

\begin{abstract}
To overcome the \textit{curse of dimensionality} and \textit{curse of modeling} in Dynamic Programming (DP) methods
for solving classical Markov Decision Process (MDP) problems,
Reinforcement Learning (RL) algorithms are popular. 
In this paper, we consider an infinite-horizon average reward MDP problem and prove the optimality of the threshold policy under certain
conditions.
Traditional RL techniques do not exploit the threshold nature of optimal policy while learning. In this paper, we propose
a new RL algorithm which utilizes the known threshold structure of the optimal policy while learning by reducing the feasible
policy space. We establish that the proposed algorithm converges to the optimal policy.
It provides a significant improvement in convergence speed and computational and storage complexity over traditional RL algorithms. 
The proposed technique can be applied to a wide variety of optimization problems
that include energy efficient data transmission and management of queues.
We exhibit the improvement in
convergence speed of the proposed algorithm over other RL algorithms through simulations.
\end{abstract}

%
%
%

\keywords{MDP, Stochastic Approximation, Reinforcement Learning, Threshold Structure.}

\maketitle

\section{Introduction}
 The framework of Markov Decision Process (MDP) \cite{puterman2014markov} is used in modeling and optimization of stochastic systems that involve decision making. An MDP is a controlled stochastic process on a state space with an associated control process of `actions', where the transition from one state to the next depends only on the current state-action pair and not on the past history of the system (known as the controlled Markov property).  Each state transition is associated with a reward. Our MDP problem aims to maximize the average reward and provides an optimal \textit{policy} as a solution. A policy is a mapping from a state to an action describing which action is to be
 chosen in a state. An optimal policy maximizes the average reward.\par
 A common approach for solving MDP problems is Dynamic Programming (DP) \cite{puterman2014markov}.
 In this paper, we consider an MDP problem and prove that the optimal policy has a threshold
 structure using DP methods. In other words, we prove that up to a certain threshold in the state space, a specific action is preferred and thereafter
 another action is preferred.\par
 Classical iterative methods for DP are computationally inefficient in the face of
 large state and action spaces.
 This is known as the curse of dimensionality.
 Moreover, the computation of optimal policy using DP methods
 requires the knowledge of state transition probability matrix which is often governed by the statistics of unknown system dynamics. For example,
 in a telecommunication system, transition probabilities between different states are determined by the statistics of arrival
 rates of users. This is known as the curse of modeling.
 In practice, it may be difficult
 to gather the knowledge regarding the statistics of the system dynamics beforehand. When we do not have any prior knowledge of the statistics
 of the system dynamics, a popular approach is Reinforcement Learning (RL) techniques which learn the optimal policy iteratively by trial and error
 \cite{sutton1998reinforcement}.
 Examples of RL techniques include TD($\lambda$) \cite{sutton1998reinforcement}, Q-learning \cite{watkins1992q}, actor-critic \cite{borkar2005actor},
 policy gradient \cite{sutton2000policy} and Post-Decision State (PDS) learning \cite{powell2007approximate,salodkar2008line}.
 Consider, e.g., Q-learning and PDS learning.
 Q-learning \cite{watkins1992q}
 is one of most popular learning algorithms. Q-learning
 iteratively computes the Q-function associated with every state-action pair using
 a combination of exploration and exploitation.
 Since Q-learning needs to learn the optimal policy for all state-action pairs, the storage complexity of
 the scheme is of the order of the cardinality of the state space times the cardinality of the action space. In many cases of practical interest, 
 the state and action spaces are large which renders Q-learning impractical. Furthermore, due to the presence of exploration, the convergence rate of Q-learning is generally slow.
 The idea of PDS \cite{powell2007approximate,salodkar2008line} learning obtained by reformulating the Relative Value Iteration
 Algorithm (RVIA) \cite{puterman2014markov} equation is adopted in literature for various problems. The main advantage of PDS learning
 is that it circumvents the action exploration, thereby improving the convergence rate. Also, there is no need of
 storing the Q functions of state-action pairs. Instead, it requires only storing the value functions associated with the states. Therefore the storage complexity
 of the PDS learning scheme is lower than that of Q-learning.\par
 A common drawback of the learning schemes described above is that they do not exploit any known properties related to the structure of the optimal policy.
 In other words, while learning the optimal policy, the schemes search the optimal policy from the set of all possible policies. However,
 depending on the structure of the optimal policy, the size of feasible action set in various states can be reduced. Moreover, depending on the optimal policy, some
 of the states may not be visited at all. If we incorporate such knowledge in the learning process, intuitively, faster convergence can be achieved
 due to reductions in the state and action spaces or the range of possible policies.
 Furthermore, this may result in reductions in storage and computational complexity
 as well. \par
  In this paper, we propose a Structure-Aware
 Learning (SAL) algorithm which exploits the threshold nature of optimal policy and searches the optimal policy only from the set of threshold
 policies. To be precise, instead of learning the optimal policy for the entire state space, it only learns the threshold in the state space where
 the optimal action changes. Based on the gradient of the average reward of the system, the threshold is updated on a slower timescale than
 that of the value function iterates. As a result, the convergence time of the proposed algorithm reduces along with a reduction in computational
 complexity and storage complexity in comparison to traditional schemes such as Q-learning and PDS learning. We prove that the proposed scheme
 indeed converges to the optimal policy. In general, the proposed technique is applicable to a large variety of optimization problems
 where the optimal policy is threshold in nature, e.g., \cite{agarwal2008structural, sinha2012optimal,koole1998structural,brouns2006optimal,
 ngo2009optimality}.
 Simulation results are presented where the proposed technique is employed on a well-known problem
 from queuing theory \cite{koole1998structural} to demonstrate that the proposed algorithm indeed offers
 faster convergence than traditional algorithms.\par
There are a few works in the literature
 \cite{kunnumkal2008exploiting,fu2012structure,ngo2010monotonicity} which exploit the
 structural properties in the learning framework. In \cite{fu2012structure}, an online learning algorithm which approximates the value functions
 using piecewise linear functions is proposed. However, there is an associated trade-off between complexity and approximation accuracy in this scheme.
 In \cite{kunnumkal2008exploiting}, authors propose a variant of Q-learning where the value function iterates are projected in such a manner that they
 preserve the monotonicity in system state. Similar model is adopted in \cite{ngo2010monotonicity}. Although there is an improvement
 in convergence rate over conventional Q-learning, not much gain in computational complexity is achieved.
 Unlike us, none of these works consider the threshold as a parameter in the learning framework. Therefore they are computationally
 less efficient than our solution.\par
 The rest of the paper is organized as follows. The system model and problem formulation are described in Section \ref{sec:sysmod}. In Section
 \ref{sec:struc}, the optimality of threshold policy is established. In Section \ref{sec:strucRL}, the structure-aware learning algorithm
 is proposed along with a proof of convergence. We provide a comparative study of computational and storage complexities of different RL
 schemes in Section \ref{sec:complexity}. Simulation results are provided in Section \ref{sec:simu}. Section \ref{sec:ext} discusses possible extensions of
 the problem, followed by conclusions in
 Section \ref{sec:conclusion}.
 \section{System Model \& Problem Formulation}\label{sec:sysmod}
 We consider a controlled time-homogeneous Discrete Time Markov Chain (DTMC) and denote it by $\{X_n\}_{n\ge 0},$ which takes values from the finite 
 state space $\mathcal{S}$. Without loss of generality,
 we assume that $\mathcal{S}=\{0,1,2,\ldots,N\}$, where $N$ is a fixed positive integer.
 For the sake of simplicity, we assume that
 each state $i\in \mathcal{S}$ is associated with an action space $\mathcal{A}$. Let the action space $\mathcal{A}$ consists of two actions, viz.,
 $A_1$ and $A_2$. Let the transition probability of going from state $i\in \mathcal{S}$ to state $j\in \mathcal{S}$ under action $a\in \mathcal{A}$
 be denoted as $p_{ij}(a)$. Therefore, we have, $p_{ij}(a) \in [0,1] \ \forall i,j,a$ and $\sum\limits_j p_{ij}(a)=1$. Let the action process
 be denoted by $Z_n,n \ge 0$. Therefore, the evolution of $X_n$ can be described by
 \begin{equation*}
  P(X_{n+1}=j|X_m,Z_m,m\le n,X_n=i)=p_{ij}(Z_n),n\ge 0.
 \end{equation*}
Let us assume that whenever $A_1$ is chosen in state $i\in \mathcal{S}$, no reward is obtained, and the system remains in the same state with
probability $p$ and
goes to state $(i-1)^+$ with the remaining probability,
where $(i)^+=\max\{i,0\}$.
We further assume that whenever the system is in state $i\in \mathcal{S}$ and $A_2$ is chosen, a non-negative fixed reward $r$ is obtained and the system moves to
state $(i+1)$ with probability $p$ and moves to state $(i-1)^+$ with the remaining probability.
Note that the $A_2$ is not feasible in state $N$.\par
We have used this model for sake of specificity and because it does arise in practice. Analogous schemes can be developed for other models that naturally lead to a threshold structure.\par
We aim to obtain a policy which maximizes the average expected reward of the system. Let $\mathcal{Q}$ be the set of memoryless policies where the decision
rule at time $t$ depends only on the state of the system at time $t$ and not on the past history.
Under the assumption of unichain nature of the underlying Markov chain which guarantees the existence of unique stationary distribution, let
the average reward of the system over infinite horizon under policy $Q \in \mathcal{Q}$ be independent of the initial condition and be denoted by $\sigma_Q$.
That is, we intend to maximize
\begin{equation}\label{eq:optimal}
 \sigma_Q= \lim_{H\to \infty}\frac{1}{H}\sum_{h=1}^H {\mathbb{E}}_Q [r(X_h,Z_h)],
\end{equation}
where $r(X_h,Z_h)$ denotes the reward function in state $X_h$ under action $Z_h$, and ${\mathbb{E}}_Q$ denotes the expectation
operator under policy $Q$. The limit in Equation (\ref{eq:optimal}) may be taken to exist because the optimal policy is known to be stationary.
The DP equation depicted below provides the necessary condition for optimality $\forall i \in \mathcal{S}$.
\begin{equation}
 V(i)=\max_{a \in \mathcal{A}}\left[ r(i,a)+\sum_{j \in \mathcal{S}}p_{ij}(a)V(j)-\sigma \right],
\end{equation}
where $V(i)$ and $\sigma$ denote the value function of state $i\in \mathcal{S}$ and the optimal average reward, respectively. The $\arg\max$ above
yields the optimal policy, i.e., optimal action as a function of current state.
RVIA can be used to solve this problem using the iterative scheme described below.
\begin{equation}\label{eq:rvia}
  V_{n+1}(i)=\max_{a \in \mathcal{A}}\left[r(i,a)+\sum_{j \in \mathcal{S}}p_{ij}(a)V_n(j)-V_n(i^*)\right],
\end{equation}
where $V_n(.)$ is the value function estimate in $n^{\rm{th}}$ iteration of RVIA and $i^*\in \mathcal{S}$ is a fixed state.

\section{Structure of Optimal Policy}\label{sec:struc}
In this section, we investigate the structure of the optimal policy. We prove the structural properties using the `non-increasing difference' property
of the value function in the lemma described next.
\begin{lemma}\label{lemma1}
$V(i+1)-V(i)$ is non-increasing in $i$.
\end{lemma}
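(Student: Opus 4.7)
The plan is to establish the concavity of $V$ by induction along the relative value iteration recursion (\ref{eq:rvia}), showing that $\Delta V_n(i) := V_n(i+1) - V_n(i)$ is non-increasing in $i$ at every iterate $n$, and then passing to the $n \to \infty$ limit (under the stated unichain assumption the RVIA iterates converge and the non-increasing-in-$i$ property is preserved under pointwise limits). Starting from $V_0 \equiv 0$ the base case is trivial.

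For the inductive step, write $T_a V_n(i) := r(i,a) + \sum_j p_{ij}(a) V_n(j)$ and $\tilde V_{n+1}(i) := \max_{a \in \mathcal{A}} T_a V_n(i)$, so that $V_{n+1}(i) = \tilde V_{n+1}(i) - V_n(i^*)$; the subtracted constant cancels in differences. Letting $a^*(i)$ denote any maximizer at state $i$,
\begin{equation*}
\tilde V_{n+1}(i-1) + \tilde V_{n+1}(i+1) = T_{a^*(i-1)} V_n(i-1) + T_{a^*(i+1)} V_n(i+1),
\end{equation*}
and since $T_{a^*(i-1)} V_n(i) + T_{a^*(i+1)} V_n(i) \leq 2\,\tilde V_{n+1}(i)$, it suffices to prove the pairwise inequality
\begin{equation*}
T_a V_n(i+1) - T_a V_n(i) \;\leq\; T_b V_n(i) - T_b V_n(i-1) \qquad \forall\,(a,b)\in\mathcal{A}^2.
\end{equation*}

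I would verify this by a direct case check over the four pairs in $\{A_1,A_2\}^2$: in each case the explicit one-step dynamics (self-loop/down for $A_1$, up/down for $A_2$) express both sides as convex combinations of two consecutive values of $\Delta V_n$, so the inequality reduces to a sum of second-difference terms of $V_n$, each non-positive by the inductive hypothesis.

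The main obstacle is the boundary behaviour: at $i=0$ the map $(i-1)^+$ identifies states $-1$ and $0$ so that $T_{A_1}V_n(0)=V_n(0)$ and the induction must be run carefully near the reflecting endpoint, while at $i=N$ the action $A_2$ is infeasible so that the maximum is taken over a smaller set. Both are handled by restricting the pair $(a,b)$ in the sufficient condition to the actions actually available at the endpoints, after which the same case-wise algebraic reductions go through. Taking $n \to \infty$ then transfers the non-increasing property from $\Delta V_n$ to $\Delta V$, proving the lemma.
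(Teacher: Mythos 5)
Your overall route is the same as the paper's: induction along the (relative) value iteration recursion showing that the first differences stay non-increasing in $i$, a case check over action pairs to handle the max step, and a passage to the limit $V=\lim_n V_n$. Your generic cross inequality $T_aV_n(i+1)-T_aV_n(i)\le T_bV_n(i)-T_bV_n(i-1)$ is precisely the paper's four-case comparison of maximizing actions, just carried out before (rather than after) factoring out the common downward term $(1-p)V_n((i-1)^+)$; in the interior your reduction to non-positive second differences is correct.

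The one step that does not go through as stated is the reflecting boundary, and your proposed fix does not address it. Restricting $(a,b)$ to feasible actions only bites at $i=N$ (where $A_2$ is unavailable); at the lower end both actions are feasible in states $0$ and $1$, so nothing is restricted, and the cross inequality at $i=1$ does not reduce to second differences alone. For instance, with $a=b=A_1$ it reads $p\Delta_n(1)+(1-p)\Delta_n(0)\le p\Delta_n(0)$, i.e.\ $p\bigl(\Delta_n(1)-\Delta_n(0)\bigr)\le -(1-p)\Delta_n(0)$, where $\Delta_n(i)=V_n(i+1)-V_n(i)$: concavity makes the left side non-positive, but the right side is non-negative only if $\Delta_n(0)\le 0$. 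The same extra fact is needed even if you restrict attention to the actual maximizing pair (if $\Delta_n(0)>0$ the maximizer at $0$ is $A_2$, and the resulting inequality still requires $\Delta_n(0)\le 0$). So you must carry monotonicity of $V_n$ in $i$ through the same induction --- true and easy here, since $v_0\equiv 0$, the downward kernel is common to both actions, and $A_2$ is barred only at $N$ --- or otherwise control the boundary term. In fairness, the paper's own appendix is silent on exactly this point (its final assembly implicitly uses that $i\mapsto v_n((i-1)^+)$ has non-increasing increments, which across $i=0,1$ again needs $\Delta_n(0)\le 0$), so your plan is no worse than the published proof; but as written, the claim that every case is ``non-positive by the inductive hypothesis'' fails at the reflecting endpoint and needs this supplement.
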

\begin{proof}
Proof is presented in Appendix \ref{app0}.
\end{proof}

The following theorem describes that the optimal policy is of threshold type where $A_2$ is optimal only upto a certain threshold.
\begin{theorem}
 The optimal policy has a threshold structure where the optimal action changes from $A_2$ to $A_1$ after a certain threshold
 in $i\in \mathcal{S}$.
\end{theorem}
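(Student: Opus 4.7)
The plan is to reduce the threshold claim to the monotonicity result already proved in Lemma~\ref{lemma1} by directly comparing the two action values appearing inside the maximum in the DP equation.

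First, I would write the Q-values at an arbitrary state $i\in\mathcal{S}$ explicitly using the transition structure given in Section~\ref{sec:sysmod}. For action $A_1$, the reward is zero and the next state is $i$ with probability $p$ and $(i-1)^+$ with probability $1-p$; for action $A_2$, the reward is $r$ and the next state is $i+1$ with probability $p$ and $(i-1)^+$ with probability $1-p$. Substituting into the DP equation yields
\begin{align*}
Q(i,A_1) &= p\,V(i) + (1-p)\,V((i-1)^+) - \sigma,\\
Q(i,A_2) &= r + p\,V(i+1) + (1-p)\,V((i-1)^+) - \sigma.
\end{align*}
Because the $(1-p)\,V((i-1)^+) - \sigma$ term is common to both actions, the preference between $A_1$ and $A_2$ depends only on the sign of $Q(i,A_2)-Q(i,A_1) = r + p\bigl[V(i+1)-V(i)\bigr]$. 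Hence $A_2$ is optimal at $i$ if and only if
\begin{equation*}
V(i+1)-V(i) \;\geq\; -\tfrac{r}{p}.
\end{equation*}

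Now I would invoke Lemma~\ref{lemma1}: since $V(i+1)-V(i)$ is non-increasing in $i$, the set of states at which the above inequality holds is a downward-closed interval $\{0,1,\ldots,i^*\}$ for some threshold $i^*$ (with the convention that $i^* = -1$ if $A_1$ is optimal everywhere). For all $i \leq i^*$, $A_2$ is the optimal action, while for all $i > i^*$, $A_1$ is optimal; this is exactly the threshold structure claimed.

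The only real obstacle is a minor edge case: at $i=N$ action $A_2$ is infeasible, so any threshold derived from the comparison must be capped at $N-1$. This is automatic, since if $i^* = N$ were to arise from the inequality, one simply truncates to $\min\{i^*, N-1\}$ without altering the preference at any feasible state. All other steps are purely algebraic consequences of the monotone differences property of $V$, so no further technical work is needed beyond Lemma~\ref{lemma1}.
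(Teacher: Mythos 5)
Your proof is correct and takes essentially the same route as the paper: compare the two action values at state $i$, note that after cancelling the common $(1-p)V((i-1)^+)-\sigma$ term their difference is $r$ plus a positive multiple of $V(i+1)-V(i)$, and invoke Lemma \ref{lemma1} to conclude the set of states preferring $A_2$ is downward closed. The only cosmetic difference is the threshold criterion constant ($-r/p$ in your explicit Q-value computation versus $-r$ in the paper, which uses the rewritten Bellman equation of Appendix \ref{app0}); this does not affect the argument.
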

\begin{proof}
 If $A_1$ is optimal in state $i$, then $r+V(i+1)\le V(i)$. 
Using Lemma \ref{lemma1}, $V(i+1)-V(i)$ is non-increasing in $i$.
 Therefore, it follows that there exists a threshold such that $A_2$ is optimal only below the threshold, $A_1$ thereafter.
\end{proof}
\section{Structure-aware Online RL Algorithm}\label{sec:strucRL}
In this section, we propose a learning algorithm by exploiting the threshold properties of the optimal policy. Unlike the traditional RL
algorithms which optimize over the entire policy space, our algorithm searches the optimal policy only from the set of threshold policies.
As a result, the proposed algorithm converges faster than traditional RL algorithms like Q-learning, PDS learning. Also, the
computational complexity and the storage complexity of learning is reduced as argued later.
\subsection{Gradient Based RL Framework}
 Since we know that the optimal policy is threshold in nature where the optimal action changes from $A_2$ to $A_1$ after a certain threshold,
 if we know the value of the threshold,
 we can specify the optimal policy completely. However, the value of the threshold depends on the transition probabilities (i.e., $p$) between different
 states.
 Therefore, in the absence of knowledge regarding $p$, instead of learning the optimal policy from the set of all policies, we only learn the the optimal value
 of the threshold. We target to optimize over the threshold using an update rule so that the value of threshold converges to the optimal threshold.\par
We consider the set of threshold policies and describe them in terms of the value of parameter threshold ($T$, say). The approach we adopt in this paper is to
compute the gradient of the average expected reward of the system with respect to the threshold $T$ and improve the threshold policy in the
direction of the gradient by updating the the value of $T$. Before proceeding, we need to explicitly indicate the dependence of the associated MDP
on $T$ by redefining the notations in the context of threshold policies. \par
Let the steady state stationary probability of state $i$, the value function of state $i$ and the average reward of the Markov chain in terms of threshold
parameter $T$ be denoted by ${\pi}(i,T)$, $V(i,T)$ and $\sigma(T)$, respectively. Let the transition probability from state $i$ to state $j$
under threshold $T$ be denoted as $P_{ij}(T)$.
Therefore,
\begin{equation*}
 P_{ij}(T)=P(X_{n+1}=j|X_n=i,T).
\end{equation*}
We later embed the discrete parameter $T$ into a continuous valued one. With this in mind, we make the following assumption regarding $P_{ij}(T)$.
\begin{assumption}\label{vtools:ass2}
$P_{ij}(T)$ is a twice differentiable function of $T$ with bounded first and second derivatives. Moreover, $P_{ij}(T)$
is bounded.
\end{assumption}
The proposition
described below provides a closed-form expression for the gradient of the average reward $\sigma(T)$.
\begin{proposition}
Under Assumption \ref{vtools:ass2},
\begin{equation*}
 \nabla \sigma(T)=\sum_{i \in \mathcal{S}} {\pi}(i,T) \sum_{j \in \mathcal{S}} \nabla P_{ij}(T)V(j,T).
\end{equation*}

\end{proposition}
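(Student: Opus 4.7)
The plan is to start from the Poisson (average-reward Bellman) equation for the controlled chain under the threshold policy indexed by $T$, differentiate it in $T$, and then use stationarity of $\pi(\cdot,T)$ to wipe out the gradient-of-value terms. Concretely, fix a reference state $i^*$ and write
\begin{equation*}
V(i,T) + \sigma(T) = R(i,T) + \sum_{j \in \mathcal{S}} P_{ij}(T) V(j,T),
\end{equation*}
where $R(i,T)$ is the expected one-step reward in state $i$ under the (continuously embedded) threshold policy. I would note that $R(i,T)$ itself has the form $\sum_j P_{ij}(T) r(i,j)$ for fixed transition rewards $r(i,j)$ (reward $r$ for an upward jump under $A_2$, zero otherwise), so it can eventually be folded into the $V(j,T)$ summation.

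Before differentiating, I would justify smoothness in $T$. Assumption~\ref{vtools:ass2} ensures $P_{ij}(T)$ (and hence $R(i,T)$) is $C^2$ with bounded derivatives. Since $\mathcal{S}$ is finite and the chain is unichain, $\pi(\cdot,T)$ solves a smoothly varying linear system, and $(V(\cdot,T),\sigma(T))$ solves another such system once $V(i^*,T)$ is pinned down; the implicit function theorem then delivers differentiability of $\pi(\cdot,T)$, $V(\cdot,T)$, and $\sigma(T)$ in $T$.

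Next, I would differentiate the Poisson equation in $T$ to obtain
\begin{equation*}
\nabla V(i,T) + \nabla \sigma(T) = \nabla R(i,T) + \sum_{j} \nabla P_{ij}(T)\,V(j,T) + \sum_{j} P_{ij}(T)\, \nabla V(j,T),
\end{equation*}
multiply by $\pi(i,T)$, and sum over $i\in\mathcal{S}$. The critical cancellation comes from the invariance $\sum_i \pi(i,T) P_{ij}(T) = \pi(j,T)$, which turns $\sum_i \pi(i,T) \sum_j P_{ij}(T)\nabla V(j,T)$ into $\sum_j \pi(j,T)\nabla V(j,T)$ and thus annihilates the $\nabla V$ terms on both sides, leaving
\begin{equation*}
\nabla \sigma(T) = \sum_{i} \pi(i,T)\,\nabla R(i,T) + \sum_{i} \pi(i,T) \sum_{j} \nabla P_{ij}(T)\,V(j,T).
\end{equation*}
Finally, substituting $\nabla R(i,T) = \sum_j \nabla P_{ij}(T)\,r(i,j)$ and redefining the value function by the shift $\tilde V(j,T) := V(j,T)+r(\cdot,j)$ (or, equivalently, absorbing the transition reward into the value function via the standard $Q$-function identity), the two sums collapse into the single sum $\sum_{i}\pi(i,T)\sum_{j}\nabla P_{ij}(T)\,V(j,T)$ claimed in the proposition.

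The main obstacle I anticipate is not the algebra but the smoothness scaffolding: one must first embed the discrete threshold $T$ into a continuous parameter (e.g.\ by randomizing between $A_1$ and $A_2$ at the boundary state with a probability that interpolates monotonically in $T$) in such a way that Assumption~\ref{vtools:ass2} genuinely holds and the chain remains unichain throughout, so that $\pi(\cdot,T)$ and $V(\cdot,T)$ are well-defined and differentiable and the formal differentiation above is rigorously valid.
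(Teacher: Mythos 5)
Your overall route is the one the paper itself relies on: the paper's ``proof'' is a pointer to \cite{marbach2001simulation}, and the argument there is exactly what you write out --- differentiate the Poisson equation in $T$, multiply by $\pi(i,T)$, sum over $i$, and use the invariance $\sum_i \pi(i,T)P_{ij}(T)=\pi(j,T)$ to cancel the $\nabla V$ terms --- together with the smoothness scaffolding (continuous embedding of $T$, implicit-function-theorem differentiability of $\pi$, $V$, $\sigma$) that the paper leaves implicit. So up to the cancellation step you are reproducing the cited proof.

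The gap is in your last step. The cancellation yields $\nabla\sigma(T)=\sum_i\pi(i,T)\nabla R(i,T)+\sum_i\pi(i,T)\sum_j\nabla P_{ij}(T)V(j,T)$, and you then try to make the first term vanish by rewriting $R$ as a transition reward and ``shifting'' $V(j,T)$ by $r(\cdot,j)$. This does not deliver the stated identity. First, the reward $r$ is earned whenever $A_2$ is chosen, not only on an upward jump, so under the randomized embedding $R(i,T)=r\,(1-f(i,T))$ and $\nabla R(i,T)=-r\,\nabla f(i,T)\neq 0$. Second, even after re-attributing the reward to transitions (which is possible here, e.g.\ reward $r/p$ per upward jump), the absorbed expression is $\sum_i\pi(i,T)\sum_j\nabla P_{ij}(T)\,[\,r(i,j)+V(j,T)\,]$, which differs from the Proposition's right-hand side by $\sum_i\pi(i,T)\sum_j\nabla P_{ij}(T)\,r(i,j)$, a quantity that is not zero in general; and a shift of $V(j,T)$ by a non-constant function of $j$ changes the value of $\sum_j\nabla P_{ij}(T)V(j,T)$ (only constant shifts are harmless, because each row of $P(T)$ sums to one), so after the shift the formula holds for a modified value function, not for the relative value function $V(\cdot,T)$ defined by the paper's Poisson equation and used in its algorithm. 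The paper's reading is simply that $r(i,a)$ has no explicit dependence on the parameter $T$, so the $\nabla R$-term of the general Marbach--Tsitsiklis formula is taken to be absent and the Proposition is the resulting special case; if one insists on tracking the $T$-dependence induced by the randomized policy, the rigorous statement is the Q-factor (policy-gradient) form, and it cannot be reduced to the claimed formula by the shift you propose.
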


\begin{proof}
 Detailed proof can be found in \cite{marbach2001simulation}.
\end{proof}
The system model considered by us is a special case of the model considered in \cite{marbach2001simulation}, with the exception that unlike in
\cite{marbach2001simulation},
the reward function in our case does not have any dependence on $T$. 
\subsection{Online RL Algorithm}
Optimal policy can be obtained using RVIA if the transition probabilities between different states are known beforehand. In the absence of knowledge
regarding transition probabilities, we can use theory of Stochastic Approximation (SA) \cite{borkar2008stochastic} to remove the expectation
operation in Equation (\ref {eq:rvia}) and converge to the optimal policy by averaging over time.
Let $g(n)$ be a positive step-size sequence having the following properties.
\begin{equation}\label{eq:robbbin}
 \sum_{n=1}^{\infty}g(n)=\infty; \sum_{n=1}^{\infty}(g(n))^2<\infty.
\end{equation}
Let $h(n)$ be another step-size sequence with similar properties as in Equation (\ref{eq:robbbin}) along with the following additional property.
\begin{equation}\label{eq:robbbin1}
 \lim_{n \to \infty} \frac{h(n)}{g(n)}\to 0.
\end{equation}
In order to learn the optimal policy, we adopt the following strategy. We update the value function of one state at a time and keep
others unchanged. Let $S_n$ be the state whose value function is updated at $n^{\rm{th}}$ iteration. Let $\eta(i,n)$ denote the number
of times the value function of the state $i$ is updated till $n^{\rm{th}}$ iteration. Symbolically,
\begin{equation*}
 \eta(i,n)=\sum_{m=0}^n I\{i=S_m\}.
\end{equation*}
The scheme for the update of value function can be described as follows.
\begin{equation}\label{primal}
\begin{split}
& V_{n+1}(i,T)=(1-g(\eta(i,n)))V_n(i,T)+g(\eta(i,n))[r(i,a)+V_n(j,T)\\&-V_n(i^*,T)],\\&
V_{n+1}(i',T)=V_{n}(i',T), \forall i'\neq i,
\end{split}
\end{equation}
where $V_{n}(i,T)$ denotes the value function of state $i$ at the $n^{\rm{th}}$ iteration on the faster timescale when the current value of threshold is $T$.
The scheme (\ref{primal}) solves a dynamic programming equation for a fixed value of threshold $T$, referred to as primal RVIA.
To obtain the optimal threshold value, $T$ has to be iterated in a separate timescale $h(n)$.
Intuitively, in order to learn the value of the optimal threshold, we can determine the value of $\nabla \sigma(T)$ based on the current value
of threshold $T_n$ at the $n^{\rm{th}}$ iteration and then update the value of threshold in the direction of the gradient. This is similar to a stochastic
gradient scheme which can be expressed as
\begin{equation}\label{Tupdate}
 T_{n+1}=T_n+h(n)\nabla \sigma(T_n).
\end{equation}
The assumptions described in Equations (\ref{eq:robbbin}) and (\ref{eq:robbbin1}) guarantee that value function and threshold parameter are
updated in two separate timescales without interfering in each other's convergence behavior. The value functions are updated in a faster timescale
than that of the threshold. From the faster timescale, the value of threshold appears to be fixed. From the slower timescale, the value
functions seem to be equilibrated according to the current threshold value. This behavior is commonly known as
``leader-follower'' scheme.\par
Given a threshold $T$, we assume that the transition from state $i$ is determined by the rule $P^1(j|i)$, if $i<T$
and by the rule $P^0(j|i)$, otherwise. For example, consider that the system is in state $i$ and $i<T$. Then the next state to which
the system moves is governed by the rule $P^1(j|i)$ for action $A_2$. Therefore, the system moves to the state $(i+1)$.
However, if $i \ge T$, then the state transition is given by the rule $P^0(j|i)$ for action $A_1$. 
Therefore, the system remains in state $i$. 
This scheme is applied to Equation (\ref{primal}) for a fixed value of threshold $T$.\par
To update the threshold, we need to interpolate the value of threshold which takes discrete values, to continuous domain
so that the online rule can be applied. Since the threshold policy can be described as a step function which takes discrete non-negative values
as input and follows $P^1(j|i)$
upto a threshold and $P^0(j|i)$ thereafter, the derivative does not exist at all points (See Assumption \ref{vtools:ass2}).
Therefore, we propose an approximation to the threshold policy using a randomized policy. The randomized policy is a mixture of
two policies depicted by $P^0(j|i)$ and $P^1(j|i)$ with corresponding probabilities $f(i,T)$ and $(1-f(i,T))$.
To be precise,
\begin{equation}\label{appx}
 P_{ij}(T) \approx P^0(j|i)f(i,T)+P^1(j|i)(1-f(i,T)).
\end{equation}
Note that the function $f(.,.)$ which decides how much importance is to be given to respective policies, is a function of state $i$ and
current value of threshold $T$. For a convenient approximation, $f(i,T)$ should be an increasing function
of $i$. The idea is to provide comparable importances to both $P^0(j|i)$ and $P^1(j|i)$ near the threshold and reduce the importance
of $P^0(.|.)$ ($P^1(.|.)$) away from the threshold in the left (right) direction. We choose the following function
owing to its nice properties such as continuous differentiability and the existence of non-zero derivative everywhere.
\begin{equation}\label{sigmoid}
 f(i,T)=\frac{e^{(i-T-0.5)}}{1+e^{(i-T-0.5)}}.
\end{equation}
This does not satisfy Assumption 1 at $T = i, i -1$, but that does not affect our subsequent analysis if we take right or left derivatives at these points.
\begin{remark}
 Another choice of $f(i,T)$ could be the following.
 \begin{equation*}
f(i,T)=0.I\{i\le T\}+1.I\{i\ge T+1\}+(i-T).I\{T< i< T+1\}.
 \end{equation*}
Since this function exactly replicates the step function nature of the optimal policy in the interval $[0,T]$ and $[T+1,N]$ and uses
approximation only in the interval $(T,T+1)$,
the approximation error in this case is less than that of Equation (\ref{sigmoid}).
However, the derivative of the function is nonzero only in the interval $(T,T+1)$.
Therefore, if the initial guess of the threshold is outside this range, then the proposed learning scheme may not converge to the optimal
threshold as the gradient becomes zero.
\end{remark}
While devising an update rule for the threshold, we evaluate $\nabla P_{ij}(T)$ as a representative of $\nabla \sigma(T)$ and use
that in Equation (\ref{Tupdate}). From Equation (\ref{appx}), we get,
\begin{equation}\label{nabla}
\nabla P_{ij}(T) = (P^0(j|i)-P^1(j|i))\nabla f(i,T).
\end{equation}
Since multiplication by a constant factor does not impact the online update of the proposed scheme,
we incorporate an extra multiplicative factor of $\frac{1}{2}$ to the right hand side of Equation (\ref{nabla}). This operation
can be described in the following manner. In every iteration, we choose transition according to $P^0(.|.)$ and $P^1(.|.)$
with equal probabilities. $\nabla f(i,T)$ is a state-dependent term which denotes how much importance
is to be given to the value function of the state. Therefore, the update of
$T$ in the slower timescale $h(n)$ is as follows.
\begin{equation*}
 T_{n+1}=\Lambda[T_n+h(n)\nabla f(i,T_n){(-1)}^\gamma V_n(k,T_n)],
\end{equation*}
where $\gamma$ is a random variable which takes values $0$ and $1$ with equal probabilities.
If $\gamma=0$, then the transition is determined by the rule $P^0(.|.)$, else by $P^1(.|.)$.
Therefore, $k\sim \tilde{P}_{ik}$ where $\tilde{P}_{ik}=\gamma P^0(k|i)+ (1-\gamma) P^1(k|i)$.
The averaging effect of SA scheme enables us to obtain the effective drift in Equation (\ref{nabla}).
The projection operator $\Lambda$ is introduced to guarantee that the iterates remain bounded in $[0,N]$.\par
Therefore, the online RL scheme where the value functions are updated in the faster timescale and the threshold parameter
in the slower one, can be summarized as
\begin{equation}\label{primal1}
\begin{split}
& V_{n+1}(i,T)=(1-g(\eta(i,n)))V_n(i,T)+g(\eta(i,n))[r(i,a)+V_n(j,T)\\&-V_n(i^*,T)];\\&
V_{n+1}(i',T)=V_{n}(i',T), \forall i'\neq i,
\end{split}
\end{equation}

\begin{equation}\label{dual1}
 T_{n+1}=\Lambda[T_n+h(n)\nabla f(i,T_n){(-1)}^\gamma V_n(k,T_n)].
\end{equation}
The transitions in (\ref{primal1}) from $i$ to $j$ correspond to a single run of a simulated chain as is common in RL. For each current state $i$, the $k$ in (\ref{dual1}) is generated separately as per $\tilde{P}_{ik}$.

\begin{theorem}\label{theo1}
The schemes (\ref{primal1}) and (\ref{dual1}) converge to optimality almost surely (a.s.).
\end{theorem}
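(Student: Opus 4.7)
The plan is to treat (\ref{primal1})--(\ref{dual1}) as a two-timescale stochastic approximation with projection, in the framework of Borkar~\cite{borkar2008stochastic}. Because $h(n)/g(n)\to 0$, the slow iterate $T_n$ is quasi-static on the timescale of $g(n)$. First, I would fix $T$, regard (\ref{primal1}) as an asynchronous relative-value iteration for the randomized policy with kernel $P_{ij}(T)=f(i,T)P^0(j|i)+(1-f(i,T))P^1(j|i)$, and invoke the standard asynchronous RVIA-based SA results (as in Chapters 7--8 of~\cite{borkar2008stochastic}, together with Abounadi--Bertsekas--Borkar-style arguments for the relative-value offset $V_n(i^{*},T)$) to conclude that, provided every state is visited infinitely often and the iterates remain bounded, $V_n(\cdot,T)\to V(\cdot,T)$, the relative value function of the frozen randomized policy.

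With the fast iterate slaved to $V(\cdot,T_n)$, I would next compute the conditional drift of (\ref{dual1}). Given $i$ and $T$, averaging over $\gamma\sim\mathrm{Bernoulli}(1/2)$ and $k\sim\tilde{P}_{ik}$ gives
\begin{equation*}
\mathbb{E}\bigl[(-1)^{\gamma} V(k,T)\,\big|\,i,T\bigr]=\tfrac{1}{2}\sum_{k}\bigl(P^0(k|i)-P^1(k|i)\bigr)V(k,T).
\end{equation*}
Multiplying by $\nabla f(i,T)$ and using (\ref{nabla}) yields $\tfrac{1}{2}\sum_{k}\nabla P_{ik}(T)V(k,T)$, and a further average of $i$ under the stationary law $\pi(\cdot,T)$ of the simulated chain---combined with the Proposition---produces the effective drift $\tfrac{1}{2}\nabla\sigma(T_n)$. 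By the projected-ODE theorem (Kushner--Clark; Chapters 5--6 of~\cite{borkar2008stochastic}), the interpolated trajectory of $T_n$ a.s.\ tracks the projected flow $\dot{T}=\Pi_{[0,N]}\bigl(\tfrac{1}{2}\nabla\sigma(T)\bigr)$ and converges to its equilibrium set.

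To identify the limit with the optimal threshold I would appeal to the structural result of Section~\ref{sec:struc}: since $f(i,T)$ is strictly monotone in $T$, one expects $\sigma(T)$ to inherit a unimodal shape, so the projected flow has a single asymptotically stable equilibrium at the continuous analog of the discrete optimal threshold. Combined with Step~1, this gives a.s.\ convergence of $(V_n(\cdot,T_n),T_n)$ to the optimal pair, establishing the claim.

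The main obstacle will be the fast-timescale step: verifying a.s.\ boundedness of $V_n$ uniformly in $T$, controlling the asynchronicity (the ratios $\eta(i,n)/\eta(j,n)$ must stay bounded away from $0$ and $\infty$), and establishing a uniform-in-$T$ Lyapunov/contraction property so that the two-timescale theorem applies cleanly. A secondary subtlety is the failure of Assumption~\ref{vtools:ass2} at $T=i,\,i-1$; as noted after (\ref{sigmoid}), I would handle this by working with one-sided derivatives and a Filippov-type interpretation of the limiting ODE.
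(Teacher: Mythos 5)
Your overall architecture matches the paper's: two-timescale stochastic approximation, with the fast iterate tracking the RVIA fixed point for the frozen randomized kernel $P_{ij}(T)$, the slow iterate tracking the projected gradient flow $\dot T=\nabla\sigma(T)$ after averaging over $\gamma$, $k$ and the stationary law, and boundedness plus the leader--follower lemma handled as in \cite{borkar2008stochastic}. (Incidentally, the boundedness issue you flag as your ``main obstacle'' is resolved in the paper by the scaled-ODE criterion of \cite{borkar2000ode}: the scaled map $M_0$ obtained by dropping the reward has the origin as globally asymptotically stable equilibrium, and the threshold iterates are bounded trivially by the projection $\Lambda$.)

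The genuine gap is in your last step, the identification of the limit. You write that because $f(i,T)$ is strictly monotone in $T$ one ``expects $\sigma(T)$ to inherit a unimodal shape,'' and you conclude convergence to the optimal threshold from that. Monotonicity of the randomization weight in $T$ gives no such thing: the gradient flow $\dot T=\nabla\sigma(T)$ a priori only converges to the set of critical points of $\sigma$, and without a proof that $\sigma(T)$ has no spurious local maxima the theorem's claim of convergence to \emph{optimality} is not established -- the paper itself remarks that for general threshold-structured MDPs the iterates may get stuck at a local maximum. The paper closes exactly this gap with two structural lemmas that your proposal has no counterpart for: Lemma \ref{sec:dec}, showing $v_n(i+1)-v_n(i)$ is non-increasing in the VIA iteration index $n$ (proved by an induction over the maximizing actions, using Lemma \ref{lemma1}), and Lemma \ref{sec:uni}, which uses Lemma \ref{sec:dec} together with a modified problem in which $A_1$ is forbidden below a given threshold $\hat T>T^*$ to show that the VIA threshold $t_n$ decreases monotonically to $T^*$ and hence that $\sigma(\hat T)$ is monotone decreasing for $\hat T>T^*$, i.e.\ $\sigma$ is unimodal. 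Only with this unimodality does the slow-timescale ODE (with inward-pointing gradient at $0$ and $N$) have $T^*$ as its unique attractor, yielding $(V_n,T_n)\to(V,T^*)$ a.s. You would need to supply an argument of this kind (or some other proof of unimodality/absence of local maxima) for your proposal to prove the stated theorem rather than mere convergence to a stationary point of $\sigma$.
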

\begin{proof}
 Proof is provided in Appendix \ref{appa}.
\end{proof}
We describe the resulting two-timescale SAL algorithm in Algorithm \ref{algo} .
\begin{algorithm}
\caption{Two-timescale SAL algorithm}\label{algo}
\label{NCalgorithm}
\begin{algorithmic}[1]

\State Initialize number of iterations $n \leftarrow 1$,
 value function ${V}({i}) \leftarrow 0, \forall {i}\in \mathcal{S}$ and the threshold $T \leftarrow 0$.
\While {TRUE}
\State Choose action $a$ governed by the current value of $T$. 

\State Update the value function of state ${i}$ using Equation (\ref{primal1}).
\State Update threshold $T$ using Equation (\ref{dual1}).
\State Update ${i}\leftarrow {j}$ and $n\leftarrow n+1$.
\EndWhile
\end{algorithmic}
\end{algorithm}
As described in Algorithm \ref{algo}, the number of iterations, value functions and the threshold are initialized at the beginning.
On every decision epoch, we choose the action which is specified by the current value of threshold.
Based on the reward obtained, the value function of states and the value of threshold are updated in faster and slower timescale, respectively.
The rules for the updates are provided in Equation (\ref{primal1}) and (\ref{dual1}), respectively.
\begin{remark}
Even if the optimal policy in an MDP problem does not have a threshold structure, the methodologies presented in this paper
which is guaranteed to converge to the optimal (at least locally) threshold policy, can be used. In general threshold policies are
easy to implement and have low storage complexity.
Besides, often a well chosen threshold policy provides a good performance.
\end{remark}

\section{Computational and Storage Complexity}\label{sec:complexity}
In this section, we provide a comparative study of computational and storage complexities associated with traditional learning algorithms
such as Q-learning, PDS learning and the SAL algorithm. The comparison is summarized in Table \ref{table}.\par
\begin{table}[ht]
\caption{Computational and storage complexities of various RL algorithms.}\label{table}
\centering
\begin{tabular}{|l||l||l|}
\hline
\textbf{Algorithm} & \textbf{Storage} & \textbf{Computational}\\
\textbf{} & \textbf{complexity} & \textbf{complexity}\\ \hline
Q-learning \cite{sutton1998reinforcement,watkins1992q} & $O(|\mathcal{S}|\times|\mathcal{A}|)$ & $O(|\mathcal{A}|)$\\ \hline
PDS learning \cite{salodkar2008line,powell2007approximate}& $O(|\mathcal{S}|)$ &  $O(|\mathcal{A}|)$  \\ \hline
SAL & $O(|\mathcal{S}|)$ & $O(1)$\\ \hline
\end{tabular}
\end{table}
As described in Table \ref{table}, Q-learning algorithm needs to store the value function associated with every state-action pair. Thus, the storage
complexity associated with Q-learning is $O(|\mathcal{S}|\times|\mathcal{A}|)$. PDS learning algorithm needs to store
the value functions associated with only the PDSs along with feasible actions in every state, thereby requiring $O(|\mathcal{S}|)$ storage.
The SAL algorithm
proposed by us needs to store the value functions of all the states and the value of threshold. We no longer need to store
feasible actions corresponding to every state since the value of threshold completely specifies the policy. Therefore, the storage complexity of SAL
algorithm is $O(|\mathcal{S}|)$.
However, for all practical purposes, once the algorithm converges, it is sufficient to store only the value of threshold instead of optimal
actions associated with every state, as required by Q-learning and PDS learning.\par
Q-learning algorithm updates the value function associated with a state-action pair in every iteration by evaluating
$|\mathcal{A}|$ functions and choosing the best one. Therefore, the per-iteration complexity associated with Q-learning is
$O(|\mathcal{A}|)$. In the case of PDS learning, each iteration involves the evaluation of $|\mathcal{A}|$ functions, thereby having a
per-iteration complexity of $O(|\mathcal{A}|)$. As evident form Equation (\ref{primal1}) and (\ref{dual1}), single iteration of the proposed
algorithm involves updating the value function of a state and the value of threshold. Therefore, the computational complexity
of our proposed algorithm is $O(1)$. This is a considerable reduction in computational complexity in comparison to Q-learning
and PDS learning.
\section{Simulation Results}\label{sec:simu}
In this section, we demonstrate the advantages offered by the proposed algorithm in terms of convergence speed with respect to other traditional algorithms such as
Q-learning \cite{sutton1998reinforcement}, PDS learning \cite{salodkar2008line}. We adopt a simple queuing model from \cite{koole1998structural}
and exhibit that the SAL algorithm converges faster than other RL algorithms.
In general, the proposed learning technique is applicable to models involving threshold structure of the optimal policy,
such as \cite{agarwal2008structural,sinha2012optimal,brouns2006optimal,ngo2009optimality}.\par
Authors in \cite{koole1998structural}
consider a single queue where the service time is exponentially distributed (with parameter $\frac{1}{\mu}$, say), and the arrival process is Poisson. The system incurs a constant cost upon
blocking a user. Additionally, there is a holding cost which is a convex function of the number of customers in the system. Authors prove that
it is optimal to admit a user only below a threshold on the number of customers.
We conduct ns-3 simulations of SAL
algorithm to exploit the threshold structure of optimal policy in \cite{koole1998structural}
and compare the convergence performance with Q-learning and PDS learning
algorithms.
\subsection{Convergence Analysis}
\begin{figure}
    \centering
    \begin{subfigure}[b]{0.34\textwidth}
       \includegraphics[width=\textwidth]{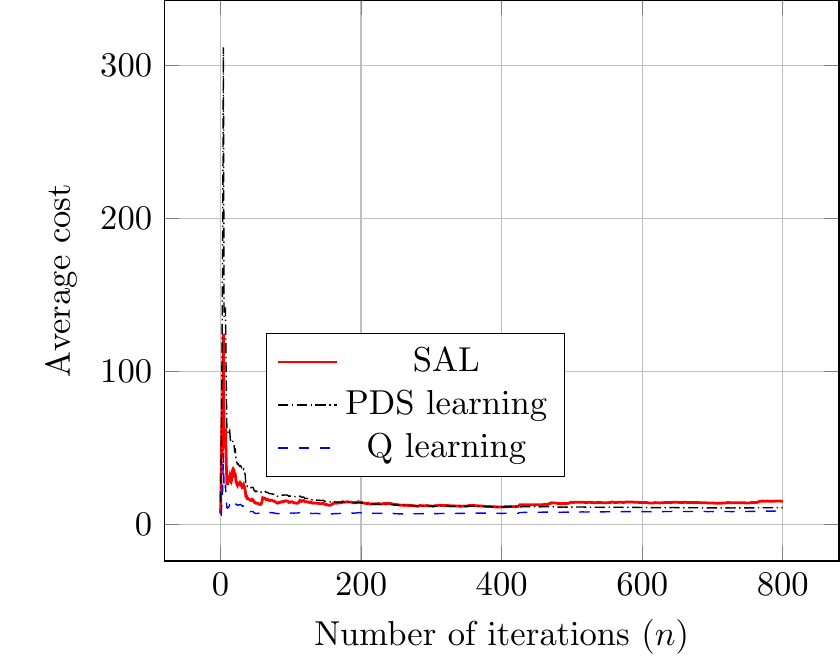}
        \caption{{$\mu=1.2 s^{-1}$.}}
        \label{fig:cost_1}
        \end{subfigure}%
    \\
    \begin{subfigure}[b]{0.34\textwidth}
       \includegraphics[width=\textwidth]{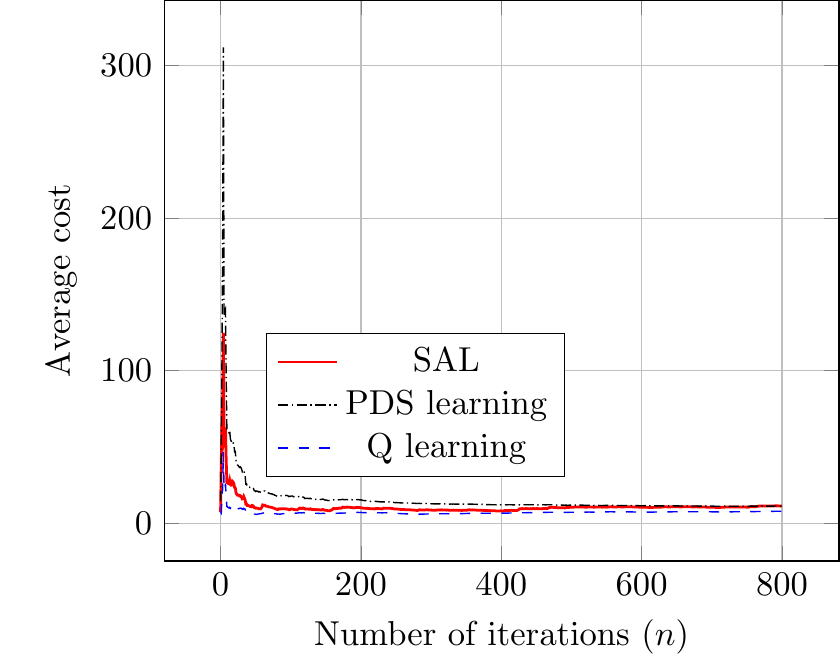}
        \caption{{ $\mu=1.5 s^{-1}$.}}
        \label{fig:cost_2}
        \end{subfigure}

\caption{{Plot of average cost vs. number of iterations $(n)$ for different algorithms.}}
\end{figure}
As illustrated in Fig. (\ref{fig:cost_1}) and (\ref{fig:cost_2}), SAL algorithm converges faster than both Q-learning and PDS learning.
Due to the absence of exploration mechanism, PDS learning has better convergence behavior than Q-learning. However, SAL algorithm
outperforms both Q-learning and PDS learning due to the fact that it operates on a smaller feasible policy space (set of threshold policies only)
than other algorithms. On the other hand, for both Q-learning and PDS learning, the policy at any given iteration may be non-threshold
in nature. This increases the convergence time to optimality. As observed in Fig. \ref{fig:cost_1},
while Q-learning and PDS learning require around $600$  and $200$ iterations, respectively, for convergence, SAL algorithm requires only
$50$ iterations. Similarly, in Fig. \ref{fig:cost_2}, the number of iterations reduces from $750$ in Q-learning and $500$ in PDS learning
to $100$ iterations in SAL algorithm.\par

\begin{figure}
    \centering
    \begin{subfigure}[b]{0.34\textwidth}
       \includegraphics[width=\textwidth]{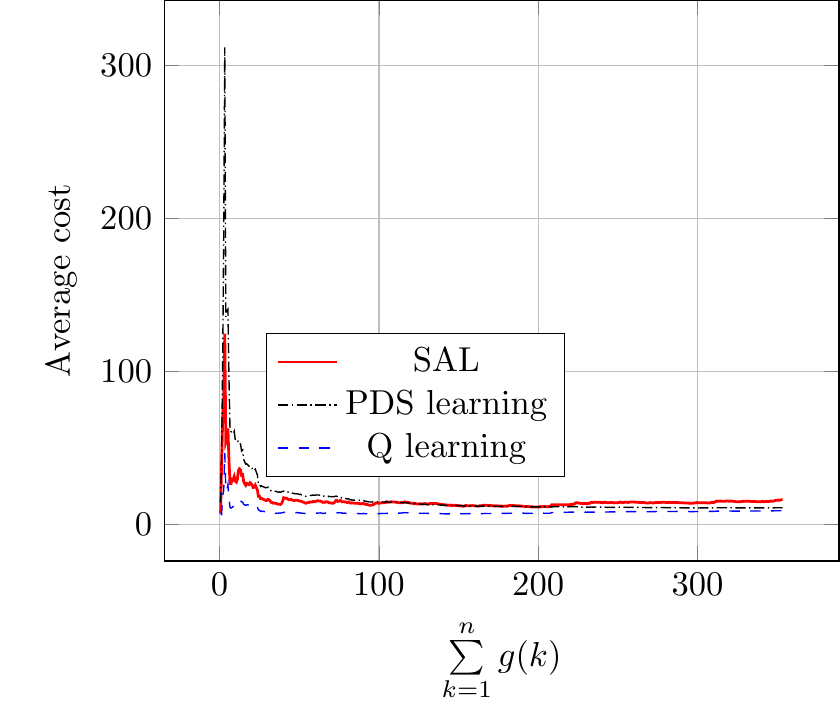}
        \caption{{$\mu=1.2 s^{-1}$.}}
        \label{fig:cost_step_1}
        \end{subfigure}%
    \\
    \begin{subfigure}[b]{0.34\textwidth}
       \includegraphics[width=\textwidth]{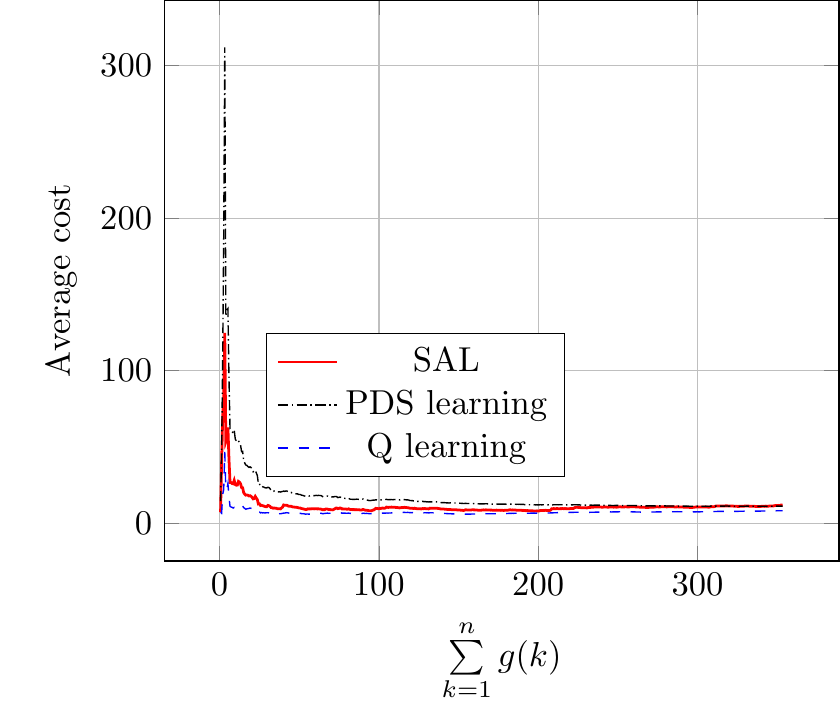}
        \caption{{ $\mu=1.5 s^{-1}$.}}
        \label{fig:cost_step_2}
        \end{subfigure}

\caption{{Plot of average cost vs. sum of step sizes till $n^{\rm {th}}$ iteration for different algorithms.}}
\end{figure}
However, for practical purposes, even if we do not converge to the optimal policy, if the average cost of the system
does not change much over a window of iterations, we can say that the stopping criterion is reached. In other words,
the current policy is close to the optimal policy with high probability. Instead of a window of iterations,
we consider the sum of step sizes till the present iteration as the parameter of choice to eliminate the effect of
declining step size in convergence. We choose the window size equal to $50$ and observe
in Fig. \ref{fig:cost_step_1} that convergence for Q-learning, PDS learning and SAL algorithm are achieved
approximately in $500,175$ and $50$ iterations. Similarly, we observe in Fig. \ref{fig:cost_step_2} that number of iterations
required for practical convergence reduces from $700$ and $300$ in Q-learning and PDS learning to $75$
in SAL algorithm.
\section{Possible Extensions}\label{sec:ext}
In this section, we describe the possible extensions of the techniques proposed in this paper.
Although the techniques employed in this paper are primarily focused towards solving MDP problems,
the techniques can be employed for learning problems involving Constrained MDP (CMDP) problem also.
Due to the presence of constraints, usually a two-timescale learning
approach is adopted \cite{borkar2008stochastic}, where the value functions are updated in one timescale and
the associated Lagrange Multiplier (LM) in another. Consideration of structure-aware
learning may introduce another timescale where the value of threshold is updated. However,
since the iterates for the  LM and the threshold are not dependent on each other,  they can be updated in the same timescale.\par
The proposed learning technique can also be extended to MDP/ CMDP problems parameterized by a set of threshold parameters rather than only one. .
In the slower timescale, one threshold parameter can be updated in a single
iteration based on the visited state and rest can be kept fixed. 
Since the update of threshold parameters follows a stochastic gradient scheme,
contrary to value function iterates, the threshold parameter iterates do not need individual local clocks for convergence.
However, for the scheme to work, the relative frequencies of the update of individual threshold parameters have to be bounded away
from zero \cite{borkar2008stochastic}. Yet another future direction is to develop RL schemes for restless bandits wherein
threshold policies often lead to simple index-based policies, see \cite{borkar2018reinforcement} for a step towards this.
\section{Conclusions}\label{sec:conclusion}
In this paper, we have considered an MDP problem and proved the optimality of threshold policies.
To this end, we have proposed a RL algorithm which exploits the threshold structure of the optimal policy while learning.
Contrary to traditional RL algorithms, the proposed algorithm searches the optimal policy only from the set of threshold
policies and hence provides faster convergence. We have proved that the proposed scheme indeed converges to
the globally optimal threshold policy. Analysis has been presented to exhibit the effectiveness of the proposed
technique in reducing the computational and storage complexity. Simulation results demonstrate the
improvement in convergence behavior of the proposed algorithm in comparison to that of Q-learning and PDS learning.
\appendix
\section{Proof of Lemma \ref{lemma1}}\label{app0}
We rewrite the optimality equation for the value function as
 \begin{equation*}
V(i)=p\max\{V(i),r+V(i+1)\}+(1-p)V((i-1)^{+}).
 \end{equation*}
Let the value function of state $i$ in $n^{\rm th}$ iteration of Value Iteration Algorithm (VIA) be denoted by $v_n(i)$.
Start with $v_0(i)=0$. 
Hence, $v_0(i+1)-v_0(i)$ is non-increasing in $i$.
We have,
 \begin{equation}\label{eq:vit}
v_{n+1}(i)=p\max\{v_n(i),r+v_n(i+1)\}+(1-p)v_n((i-1)^{+}).
 \end{equation}

Using Equation (\ref{eq:vit}), 
$v_1(i+1)-v_1(i)$ is non-increasing in $i$.
Now, we assume that $v_n(i+1)-v_n(i)$ is non-increasing in $i$. We need to prove that $v_{n+1}(i+1)-v_{n+1}(i)$ is non-increasing in $i$.
 Let us define $v'_{n+1}(i,a)$ as follows.
 \begin{equation*}
v'_{n+1}(i,a)=
 \begin{cases}
      v_n(i) ,&a=A_1,\\
      r+v_n(i+1),&a=A_2.\\
\end{cases}
\end{equation*}
Also define $v'_{n}(i)=\max\limits_{a\in \mathcal{A}} v'_{n}(i,a)$.
Let us define $D v_n(i)=v_n(i+1)-v_n(i)$.
Therefore,
\begin{equation*}
D v'_{n+1}(i,a)=
 \begin{cases}
         D v_{n}(i),&a=A_1,\\
        D v_n(i+1),&a=A_2.\\
\end{cases}
\end{equation*}

\begin{equation*}
D^2 v'_{n+1}(i,a)=
 \begin{cases}
        D^2 v_{n}(i),&a=A_1,\\
        D^2 v_n(i+1),&a=A_2.\\
\end{cases}
\end{equation*}

Since $v_n(i+1)-v_n(i)$ is non-increasing in $i$, $v'_{n+1}(i+1,a)-v'_{n+1}(i,a)$ is non-increasing in $i,$ $\forall a \in \mathcal{A}$.
Let $a_1 \in \mathcal{A}$ and $a_2 \in \mathcal{A}$ be the maximizing actions in states $(i+2)$ and $i$, respectively.

\begin{equation*}
 \begin{split}
  & 2v'_{n+1}(i+1)\ge v'_{n+1}(i+1,a_1)+v'_{n+1}(i+1,a_2)\\&
  =v'_{n+1}(i+2,a_1)+v'_{n+1}(i,a_2)+D v'_{n+1}(i,a_2)-D v'_{n+1}(i+1,a_1).
  \end{split}
 \end{equation*}
 Let $B=D v'_{n+1}(i,a_2)-D v'_{n+1}(i+1,a_1)$.
For proving that $v'_{n+1}(i+1)-v'_{n+1}(i)$ is non-increasing in $i$, we need to prove $B\ge 0$.
Let us consider four cases as follows.
\begin{itemize}
 \item $a_1=a_2=A_1$
 \begin{equation*}
 \begin{split}
 B= D v_{n}(i)-D v_{n}(i+1)=-D^2 v_n(i) \ge 0.
  \end{split}
 \end{equation*}
\item $a_1=A_1,a_2=A_2$
 \begin{equation*}
 \begin{split}
 B= D v_{n}(i+1)-D v_{n}(i+1)= 0.
  \end{split}
 \end{equation*}
\item $a_1=a_2=A_2$
 \begin{equation*}
 \begin{split}
 &B= D v_{n}(i+1)-D v_n(i+2)
 =-D^2 v_{n}(i+1) \ge 0.
  \end{split}
 \end{equation*}
 \item $a_1=A_2, a_2=A_1$
 \begin{equation*}
 \begin{split}
 &B= D v_{n}(i)-D v_n(i+2)
 =-D^2 v_{n}(i)-D^2 v_{n}(i+1) \ge 0.
  \end{split}
 \end{equation*}

\end{itemize}
Since $v'_{n+1}(i+1)-v'_{n+1}(i)$ and $v_n(i+1)-v_n(i)$ are non-increasing in $i$, $v_{n+1}(i+1)-v_{n+1}(i)$ is non-increasing in $i$ (Using (\ref{eq:vit})).
Since $V(.)=\lim \limits_{n \to \infty}v_n(.)$, $V(i+1)-V(i)$ is non-increasing in $i$.
\section{Proof of Theorem \ref{theo1}}\label{appa}
Proof methodologies adopted in this paper are similar to that of \cite{salodkar2008line}. The idea of adoption of Ordinary Differential Equation
(ODE) approach for analyzing SA algorithms by considering them as a noisy discretization of a limiting ODE \cite{borkar2008stochastic},
is considered. Step size parameters are considered as discrete time steps, and if the discrete values of the iterates are linearly interpolated,
they closely follow the trajectory of the ODE. Assumptions on step sizes, viz., (\ref{eq:robbbin}) and (\ref{eq:robbbin1}) are made to
guarantee that the discretization error and error due to noise are negligible asymptotically. As a result, in the asymptotic sense,
the iterates closely follow the trajectory of the ODEs and converge a.s. to the globally asymptotically stable equilibrium.\par
Update rules for value functions and threshold in the faster and slower timescale, respectively, are as follows.
\begin{equation}\label{primal2}
\begin{split}
& V_{n+1}(i,T)=(1-g(\eta(i,n)))V_n(i,T)+g(\eta(i,n))[r(i,a)+V_n(j,T)\\&-V_n(i^*,T)];\\&
V_{n+1}(i',T)=V_{n}(i',T), \forall i'\neq i,
\end{split}
\end{equation}

\begin{equation}\label{dual2}
 T_{n+1}=\Lambda[T_n+h(n)\nabla f(i,T_n){(-1)}^\gamma V_n(k,T_n)].
\end{equation}

Following the two timescale analysis adopted in \cite{borkar2008stochastic}, we consider Equation (\ref{primal2}) first keeping threshold
$T$ fixed.
Let $M_1:\mathbb{R}^{|\mathcal{S}|} \to \mathbb{R}^{|\mathcal{S}|}$ be a map given by
\begin{equation}\label{nonzero}
M_1(s)=\sum_j P_{ij}(T)[r(i,a)+V_n(j,T)]-V_n(i^*,T).
\end{equation}
The knowledge of $P_{ij}(T)$ is required only for the sake of analysis. However, the proposed algorithm can operate without the knowledge
of $P_{ij}(T)$. Since $T$ is kept constant, this gives rise to the following limiting ODE which tracks Equation (\ref{primal2}).
\begin{equation}\label{original}
\dot{V}(t)=M_1(V(t))-V(t).
\end{equation}
As $t\to \infty$, $V(t)$ converges to the fixed point of $M_1(.)$ (i.e., $M_1(V)=V$)\cite{konda1999actor},
which is the asymptotically stable equilibrium of the ODE.
Similar approaches are adopted in \cite{abounadi2001learning,konda1999actor}.\par
The lemma presented next establishes the boundedness of value functions and threshold iterates.
\begin{lemma}
 The value function and the threshold iterates are bounded a.s.
\end{lemma}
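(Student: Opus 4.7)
The plan is to handle the two components of the lemma separately: the threshold iterate first (which is immediate) and then the value function iterate (which is the real work). The bound on $T_n$ is straightforward because the update rule \eqref{dual2} applies the projection operator $\Lambda$ onto $[0,N]$ at every step, so $T_n \in [0,N]$ for all $n$ by construction. The remainder of the argument is devoted to the a.s.\ boundedness of $\{V_n(\cdot, T)\}$.

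For the value function iterates I would invoke the two-timescale structure: since $h(n)/g(n) \to 0$, on the faster timescale $T$ can be treated as quasi-static, and it suffices to establish boundedness of the faster iterate \eqref{primal2} for each fixed $T$. The natural tool is the Borkar--Meyn stability criterion, so I would write \eqref{primal2} in the standard stochastic approximation form with martingale difference noise (which is bounded given the per-step bounded rewards and bounded $V_n$ appearing inside the update), together with the asynchronous-update timer $\eta(i,n)$. The conditional expectation of the driving term, given the current iterate, is exactly $M_1(V_n) - V_n$ from \eqref{nonzero}, so the mean ODE is \eqref{original}.

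The core step is then to exhibit the scaled vector field $M_1^{\infty}(v) := \lim_{c\to\infty} M_1(cv)/c$ and show that the associated scaled ODE $\dot v = M_1^\infty(v) - v$ has the origin as its unique globally asymptotically stable equilibrium. Here $M_1^\infty(v)_i = \sum_j P_{ij}(T) v(j) - v(i^*)$, so the scaled ODE takes the form $\dot v = (P(T) - I)v - v_{i^*}\mathbf{1}$. I would analyze equilibria by pairing against the stationary distribution $\pi(\cdot, T)$ of the unichain $P(T)$: since $\pi^\top(P(T)-I) = 0$, an equilibrium forces $v_{i^*}=0$; then $(P(T)-I)v = 0$ together with the unichain property forces $v$ to be a constant, and the $v_{i^*}=0$ normalization pins this constant to zero. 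Global asymptotic stability of the origin for this linear-in-$v$ system then follows from the contractive spectral properties of $P(T)$ modulo the constant-vector subspace, exactly as in the RVIA analyses of \cite{abounadi2001learning, konda1999actor}. Borkar--Meyn then yields $\sup_n \|V_n(\cdot, T)\| < \infty$ a.s.

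The main obstacle, and the step I expect to require the most care, is verifying the GAS property of the scaled ODE uniformly enough in $T$ for the two-timescale argument to go through, because $T$ is drifting (albeit slowly) while the value function is being updated. Since $T_n$ stays in the compact set $[0,N]$ and by Assumption \ref{vtools:ass2} the transition kernel $P_{ij}(T)$ depends continuously on $T$, the spectral gap away from the constant-vector direction can be bounded uniformly over $T \in [0,N]$, which makes the Borkar--Meyn hypotheses hold uniformly and closes the argument. The rest of the verification (martingale difference noise bounded in $L^2$, Robbins--Monro conditions \eqref{eq:robbbin} on $g(n)$) is routine given the finite state space and the per-step reward bound $|r(i,a)| \le r$.
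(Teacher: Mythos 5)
Your proposal follows essentially the same route as the paper: the threshold iterates are bounded trivially by the projection $\Lambda$ onto $[0,N]$, and the value function iterates are handled via the Borkar--Meyn stability criterion, where your scaled field $M_1^\infty$ is exactly the paper's reward-free map $M_0$ and the scaled ODE $\dot v = M_0(v)-v$ has the origin as its globally asymptotically stable equilibrium. Your explicit equilibrium analysis (pairing with the stationary distribution to force $v_{i^*}=0$, then using the unichain property to force $v$ constant) supplies detail that the paper merely asserts, so the argument is correct and, if anything, slightly more complete.
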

\begin{proof}
Let $M_0:\mathbb{R}^{|\mathcal{S}|} \to \mathbb{R}^{|\mathcal{S}|}$ be a map given by
\begin{equation}\label{zero}
M_0(s)=\sum_j P_{ij}(T)V_n(j,T)-V_n(i^*,T).
\end{equation}
Note that Equation (\ref{nonzero}) reduces to (\ref{zero}) if the immediate reward is zero.
Now, $\lim \limits_{b\to \infty} \frac{M_1(bV)}{b}=M_0(V)$.
Consider the limiting ODE
\begin{equation}\label{scale}
\dot{V}(t)=M_0(V(t))-V(t).
\end{equation}
Observe that the globally asymptotically stable equilibrium
of the ODE (\ref{scale}) is the origin. Also, notice that the ODE (\ref{scale}) is a scaled limit of
the ODE (\ref{original}). Boundedness of $V(.)$ follows \cite{borkar2000ode}.\par
Boundedness of iterates of $T$ follows from  (\ref{dual1}).
\end{proof}
The physical interpretation behind the proof is as follows. If the iterates of the value functions
become unbounded along a subsequence, then a scaled version of the original ODE follows the ODE approximately. Since we have shown that
the scaled ODE must globally asymptotically converge to the origin, the scaled ODE must return to the origin. Therefore,
the value function iterates must also move towards a bounded set. This ensures the stability of the value function iterates.
\begin{lemma}
$V_n - V^{T_n} \to 0 $ a.s., where $V^{T_n}$ is the value function of the states for $T=T_n$.
\end{lemma}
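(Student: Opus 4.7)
The plan is to invoke the standard two-timescale stochastic approximation argument (Chapter 6 of Borkar 2008, as referenced in the excerpt). Since the step sizes satisfy $h(n)/g(n) \to 0$, the threshold iterate $T_n$ evolves on a strictly slower clock than the value function iterate $V_n$. From the viewpoint of the fast timescale on which (\ref{primal2}) runs, $T_n$ is quasi-static, so one shows that $V_n$ tracks the solution of the fast ODE with $T$ frozen, and then identifies that attractor with $V^{T_n}$.

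First I would freeze $T$ and analyze (\ref{primal2}) as an asynchronous Robbins-Monro scheme for the fixed-point equation $M_1(V) = V$. The asynchronous updates are handled by the local clocks $\eta(i,n)$: provided every state is visited infinitely often and the relative visit frequencies are bounded away from zero (which follows from the unichain assumption on the simulated chain governed by the current threshold), the asynchronous iterates track the limiting ODE $\dot V(t) = M_1(V(t)) - V(t)$ in (\ref{original}). The previously proved boundedness of $\{V_n\}$ gives stability of this ODE. Since $M_1$ is the Bellman-style operator for a fixed unichain transition matrix $P(T)$, the ODE has a unique globally asymptotically stable equilibrium, namely the relative value function $V^T$ characterized by $V^T = M_1(V^T)$. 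Therefore, in the frozen-$T$ world, $V_n \to V^T$ a.s.

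Next I would promote this to the case of a slowly varying threshold. For this I need continuity (in fact Lipschitz regularity) of the map $T \mapsto V^T$, which is where the main technical work lies and which I expect to be the principal obstacle. Lipschitz continuity follows from Assumption 1, since $P_{ij}(T)$ is twice differentiable with bounded derivatives, the chain is unichain, and the stationary distribution together with the solution of the Poisson equation depends smoothly on the transition kernel (via the group inverse of $I-P(T)$). Combined with the slow-timescale property $\sum_n (h(n))^2 < \infty$ and $h(n)/g(n) \to 0$, the increments $T_{n+1}-T_n$ go to zero fast enough that, on the fast timescale, $T_n$ is effectively constant over the horizon needed for the fast iteration to equilibrate.

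With the Lipschitz dependence in hand, the tracking lemma from two-timescale stochastic approximation (Lemma 1, Chapter 6 of Borkar 2008) applies directly: it asserts precisely that the fast iterate stays close to the current value of the slow-timescale-indexed equilibrium, i.e., $\|V_n - V^{T_n}\| \to 0$ almost surely. This yields the claim. The remainder of the convergence proof for Theorem \ref{theo1} would then analyze the slower recursion (\ref{dual2}) as a noisy gradient ascent on $\sigma(T)$, substituting $V^{T_n}$ for $V_n$ thanks to this lemma.
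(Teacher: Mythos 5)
Your proposal follows essentially the same route as the paper's proof: both treat $T_n$ as quasi-static on the fast timescale (limiting ODE $\dot T = 0$), show that the value function iterates track the frozen-$T$ ODE $\dot V(t) = M_1(V(t)) - V(t)$ whose globally asymptotically stable equilibrium is the fixed point $V^T$, use the previously established boundedness of the iterates, and then invoke the standard two-timescale tracking result from Borkar's monograph (the paper simply defers the remaining details to the actor-critic reference). Your write-up is more explicit than the paper's about the ingredients the citation hides --- Lipschitz continuity of $T \mapsto V^T$ and the asynchronous-sampling conditions on the local clocks $\eta(i,n)$ --- but this is elaboration of, not a departure from, the paper's argument.
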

\begin{proof}
We know that the threshold is varied on a slower timescale than that of $V$. Therefore, the value function iterates treat the threshold
value as constant.
Therefore, $T$ iterations can be viewed as $T_{n+1}=T_n+\alpha(n)$, where $\alpha(n)=O(h(n))=o(g(n))$.
Thus, the limiting ODEs associated with value function and threshold iterates are $\dot{V}(t)=M_1(V(t))-V(t)$ and $\dot{T}(t)=0$, respectively.
Since $\dot{T}(t)=0$,  it is sufficient to consider the ODE $\dot{V}(t)=M_1(V(t))-V(t)$, for a fixed value of $T$. The rest of the proof
is similar to \cite{borkar2005actor}.
\end{proof}
The subsequent lemmas prove that the average reward under a threshold $T$ ($\sigma(T)$) is unimodal in $T$, and hence the threshold iterations $T_n$
converge to the optimal threshold $T^*$. Therefore, $(V_n,T_n)$
converges to $(V,T^*)$.
\begin{lemma}\label{sec:dec}
 $v_n(i+1)-v_n(i)$ is non-increasing in $n$.
\end{lemma}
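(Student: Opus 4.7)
The plan is to prove the claim by induction on $n$: I aim to show that $Dv_{n+1}(i)\leq Dv_n(i)$ for every $i$, where $Dv_n(i) = v_n(i+1)-v_n(i)$ as in Appendix~\ref{app0}. The base case $n=0$ is by direct computation from $v_0\equiv 0$: the max in Equation~(\ref{eq:vit}) collapses to $r$, so $v_1(i)=pr$ at the interior states and $v_1(N)=0$ (as $A_2$ is infeasible at $N$), hence $Dv_1(i)\leq 0 = Dv_0(i)$ for every $i$.

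For the inductive step, I would rewrite the VIA as $v_{n+1}(i) = p\,v'_{n+1}(i) + (1-p)\,v_n((i-1)^+)$ with $v'_{n+1}(i):=\max\{v_n(i),\,r+v_n(i+1)\}$, reusing the notation from Appendix~\ref{app0}, and take first differences in $i$. This gives
\begin{equation*}
Dv_{n+1}(i)-Dv_n(i) = p\bigl[(v'_{n+1}(i+1)-v'_{n+1}(i))-(v'_n(i+1)-v'_n(i))\bigr] + (1-p)\bigl[Dv_n((i-1)^+)-Dv_{n-1}((i-1)^+)\bigr],
\end{equation*}
where the $(1-p)$-term vanishes at $i=0$ and is otherwise $\leq 0$ by the inductive hypothesis. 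The argument therefore reduces to proving $v'_{n+1}(i+1)-v'_{n+1}(i)\leq v'_n(i+1)-v'_n(i)$.

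For this, I would invoke Lemma~\ref{lemma1}: since each $Dv_m(\cdot)$ is non-increasing in its argument, the greedy rule at iteration $m+1$ is of threshold type with threshold $T_m := \max\{j: Dv_m(j)\geq -r\}$ (selecting $A_2$ for $i\leq T_m$ and $A_1$ for $i>T_m$). This yields the piecewise identity
\begin{equation*}
v'_{n+1}(i+1)-v'_{n+1}(i) = \begin{cases} Dv_n(i+1), & i<T_n,\\ -r, & i=T_n,\\ Dv_n(i), & i>T_n, \end{cases}
\end{equation*}
and the analogue for $v'_n$ using $T_{n-1}$; the inductive hypothesis additionally forces $T_n\leq T_{n-1}$ because $\{j: Dv_n(j)\geq -r\}\subseteq\{j: Dv_{n-1}(j)\geq -r\}$. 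A finite case analysis on the position of $i$ relative to $\{T_n,T_{n-1}\}$ then closes the argument: in the interior regions the inequality reduces to the inductive hypothesis on $Dv$ at $i$ or $i+1$, while in the boundary regions (where $i$ sits between or equals one of the two thresholds) it reduces to the defining sign conditions $Dv_n(j)<-r$ for $j>T_n$ and $Dv_{n-1}(j)\geq -r$ for $j\leq T_{n-1}$.

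The main obstacle is precisely this boundary bookkeeping: in each sub-case one must choose the correct combination of the inductive hypothesis with the strict sign conditions characterising the thresholds in order to bound the constant $-r$ appearing on one side of the comparison against a $Dv_m(\cdot)$ term on the other. The individual checks are routine, but there are several of them and they must be organised carefully.
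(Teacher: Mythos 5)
Your proposal is correct and follows essentially the same route as the paper's own proof: induction on $n$, the same decomposition $v_{n+1}(i)=p\,v'_{n+1}(i)+(1-p)v_n((i-1)^+)$ with the shift term disposed of by the inductive hypothesis, and Lemma \ref{lemma1} together with the hypothesis used to control the greedy maximizers in the max term. Your explicit thresholds $T_n\le T_{n-1}$ and the piecewise formula for $Dv'_{n+1}(i)$ are a repackaging of the paper's case analysis over the maximizing actions (where the combination $a_1=A_2$, $b_0=A_1$ is ruled out), and the remaining boundary checks you defer do close exactly as you describe.
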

\begin{proof}
 Proof is provided in Appendix \ref{appb}.
\end{proof}

\begin{lemma}\label{sec:uni}
  $\sigma(T)$ is unimodal in $T$.
\end{lemma}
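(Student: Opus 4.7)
The plan is to show the finite difference $\sigma(T+1)-\sigma(T)$ is non-negative for every integer $T\in\{0,\ldots,N-1\}$; combined with the infeasibility of $A_2$ at state $N$, this makes $\sigma(\cdot)$ monotonically non-decreasing on its feasible domain, a degenerate form of unimodality whose unique peak is $T=N$.

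First I would invoke the standard average-reward policy-difference identity. Threshold policies $T$ and $T+1$ coincide at every state except $T$ (policy $T$ uses $A_1$ there, policy $T+1$ uses $A_2$), so
\[
\sigma(T+1)-\sigma(T)\;=\;\pi(T,T+1)\bigl[\,r+p\bigl(V(T+1,T)-V(T,T)\bigr)\,\bigr],
\]
with $\pi(T,T+1)>0$ by irreducibility of the birth--death chain induced by policy $T+1$. Second, I would evaluate the bracket through the Poisson equation under policy $T$. Because every state $i\geq T$ is handled by action $A_1$ (zero reward, self-loop with probability $p$, down-step with probability $1-p$), the Poisson equation at state $T+1$ reads $V(T+1,T)=pV(T+1,T)+(1-p)V(T,T)+\sigma(T)$, giving $V(T+1,T)-V(T,T)=\sigma(T)/(1-p)$. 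Substituting, the bracket equals $r+p\sigma(T)/(1-p)\geq 0$ since $r\geq 0$, $p\in(0,1)$, and $\sigma(T)\geq 0$. Therefore $\sigma(T+1)\geq\sigma(T)$ for every $T<N$, completing the proof.

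The main obstacle is really conceptual: one must notice that states $T$ and $T+1$ are both treated by the same action $A_1$ under policy $T$, which collapses the relevant Poisson equation into a rank-one relation. If instead one attempted to attack the bracket directly by expressing $V(T+1,T)$ through the birth--death stationary distribution $\pi(i,T)\propto\rho^i$ with $\rho=p/(1-p)$ and $\sigma(T)=r(\rho^T-1)/(\rho^{T+1}-1)$, the computation is messier but still yields $\sigma(T+1)\geq\sigma(T)$ via an elementary algebraic inequality on $\rho^T$. For possible generalisations of the model (state-dependent costs, more intricate threshold structure) this shortcut evaporates, and one would then invoke an extension of Lemma~\ref{lemma1} to arbitrary fixed threshold policies (a VIA induction as in Appendix~\ref{app0} with the max replaced by the prescribed action) together with a coupling of the chains under thresholds $T$ and $T+1$, as well as Lemma~\ref{sec:dec}, to recover the appropriate monotonicity of the bracket in $T$.
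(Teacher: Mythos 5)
Your route (policy-difference identity plus the Poisson equation of the fixed threshold policy) is genuinely different from the paper's, which instead proves unimodality about $T^*$ by tracking how the VIA threshold $t_n$ evolves (via Lemma~\ref{sec:dec}) in a modified problem where $A_1$ is forbidden below $\hat T$. Your decomposition $\sigma(T+1)-\sigma(T)=\pi(T,T+1)\bigl[r+p\bigl(V(T+1,T)-V(T,T)\bigr)\bigr]$ is correct, and for this particular reward structure the stronger conclusion you aim for (monotone non-decreasing $\sigma$, peak at $N$) is in fact true. However, the key step is wrong as written: the Poisson equation under policy $T$ at state $T+1$ (action $A_1$, zero reward) is $V(T+1,T)+\sigma(T)=pV(T+1,T)+(1-p)V(T,T)$, which gives $V(T+1,T)-V(T,T)=-\sigma(T)/(1-p)$, not $+\sigma(T)/(1-p)$. (The positive sign should have been suspicious: higher states are worse here, since the chain then spends more time in the zero-reward region, so the relative value difference must be $\le 0$.) With the correct sign the bracket is $r-\frac{p}{1-p}\sigma(T)$, and its non-negativity does \emph{not} follow from $r\ge 0$ and $\sigma(T)\ge 0$; your one-line justification collapses at exactly the point that carries the whole proof.

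The conclusion is salvageable, but only through the ``messier'' computation you relegate to an aside: under threshold $T$ the chain is a birth--death chain on $\{0,\dots,T\}$ with $\pi(i,T)\propto\rho^i$, $\rho=p/(1-p)$, so $\sigma(T)=r\,\frac{\sum_{i=0}^{T-1}\rho^i}{\sum_{i=0}^{T}\rho^i}$, and the identity $\rho\sum_{i=0}^{T-1}\rho^i=\sum_{i=0}^{T}\rho^i-1$ yields $r-\frac{p}{1-p}\sigma(T)=r\big/\sum_{i=0}^{T}\rho^i>0$; hence $\sigma(T+1)>\sigma(T)$ and the unimodality claim follows (degenerately, with $T^*=N$ for this bare-bones model). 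So the quantitative bound $\frac{p}{1-p}\sigma(T)\le r$ is an essential ingredient, not an optional alternative. You should also make explicit that $V(\cdot,T)$ at the transient state $T+1$ is defined as the fixed point of the paper's map $M_1$ (so the Poisson equation legitimately applies there), and be aware that your argument is tied to this specific reward structure: the paper's VIA-based proof is what generalizes to models (e.g., the queueing example with holding costs) where $\sigma(T)$ genuinely rises and then falls around an interior $T^*$, which is the situation the subsequent convergence lemma actually needs.
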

\begin{proof}
 Proof is provided in Appendix \ref{appc}.
\end{proof}

\begin{lemma}
 The threshold iterates $T_n\to T^*$.
\end{lemma}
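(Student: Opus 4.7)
The plan is to identify the limiting ODE tracked by the projected stochastic recursion (\ref{dual2}) on the slow timescale, verify that its drift is (up to an irrelevant positive constant) the gradient $\nabla\sigma(T)$, and then invoke the unimodality established in Lemma \ref{sec:uni} to conclude that the unique optimal threshold $T^*$ is the globally asymptotically stable equilibrium. Standard projected stochastic approximation theory \cite{borkar2008stochastic} then yields $T_n\to T^*$ a.s.

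First I would compute the conditional expectation of the innovation term in (\ref{dual2}) given $(i,T_n)$. Since $\gamma$ is Bernoulli$(1/2)$ independent of everything else and, conditioned on $\gamma$, $k$ is drawn from $P^\gamma(\cdot\mid i)$, a direct calculation gives
\begin{equation*}
\mathbb{E}\bigl[\nabla f(i,T_n)(-1)^\gamma V_n(k,T_n)\,\big|\,i,T_n\bigr]
=\tfrac{1}{2}\sum_{k}\nabla P_{ik}(T_n)\,V_n(k,T_n),
\end{equation*}
where I have used (\ref{nabla}). By the preceding lemma, $V_n-V^{T_n}\to 0$ on the fast timescale, so we may replace $V_n(k,T_n)$ by $V(k,T_n)$. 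Because the sampled state $i$ is generated by the simulated chain at threshold $T_n$, its empirical distribution (again on the fast timescale, from the slow one's viewpoint) is $\pi(\cdot,T_n)$. Averaging over $i$ and appealing to the Proposition then identifies the effective drift as $\tfrac{1}{2}\nabla\sigma(T_n)$; the constant $1/2$ is harmless and can be absorbed into the step size.

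Consequently the interpolated slow iterates $T_n$ track the projected ODE
\begin{equation*}
\dot{T}(t)=\hat{\Lambda}\bigl(\nabla\sigma(T(t))\bigr),
\end{equation*}
where $\hat{\Lambda}$ is the directional projection that keeps trajectories in $[0,N]$. By Lemma \ref{sec:uni}, $\sigma(\cdot)$ is unimodal on $[0,N]$ with a unique maximizer $T^*$, so $\nabla\sigma(T)>0$ for $T<T^*$ and $\nabla\sigma(T)<0$ for $T>T^*$. Hence $T^*$ is the unique equilibrium of this ODE in $[0,N]$ and is globally asymptotically stable; the boundary points are either non-equilibria (if $T^*$ lies in the interior) or are handled by the projection (if $T^*\in\{0,N\}$). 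A standard Lyapunov argument using $-\sigma$ as potential function, together with the Kushner--Clark/Borkar theorem for projected stochastic approximation, then yields $T_n\to T^*$ a.s., completing the proof.

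The main obstacle I expect is the rigorous justification of the averaging step: showing that the empirical distribution of the state used in (\ref{dual2}) equilibrates to $\pi(\cdot,T_n)$ on the fast timescale, uniformly enough that the martingale noise term in the slow recursion is asymptotically negligible. This is where the two-timescale step-size conditions (\ref{eq:robbbin})--(\ref{eq:robbbin1}) are crucial, and it is handled by a martingale difference and Gronwall argument analogous to the one used in the previous lemma (and in \cite{borkar2005actor,salodkar2008line}); once this is in place the ODE identification and unimodality wrap up the convergence cleanly.
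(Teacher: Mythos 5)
Your proposal follows essentially the same route as the paper: identify the limiting ODE of the slow recursion (\ref{dual2}) as the (projected) gradient ascent $\dot{T}=\nabla\sigma(T)$ with inward-pointing behavior at the boundary of $[0,N]$, and then invoke the unimodality of $\sigma(\cdot)$ from Lemma \ref{sec:uni} to conclude that $T^*$ is the only attractor, so $T_n\to T^*$ a.s. Your version simply makes explicit the drift computation (the factor $\tfrac{1}{2}$ and the stationary averaging over $i$) that the paper treats informally, which is a correct elaboration rather than a different argument.
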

\begin{proof}
The limiting ODE for Equation (\ref{dual2}) is the gradient ascent
\begin{equation*}
 \dot{T}=\nabla \sigma(T),
\end{equation*}
with inward pointing gradient at $0, N$. Using Lemma \ref{sec:uni}, there does not exist any local maximum other than the global maximum $T^*$.
Therefore $T_n \to T^*$.
\end{proof}
\begin{remark}
 In general, in an MDP problem with a threshold structure, unimodality of average reward may not hold.
 In such cases, the threshold iterates may converge to a local maximum only.
\end{remark}

\section {Proof of Lemma \ref{sec:dec}}\label{appb}
 We need to prove that $D v_n(i)$ is non-increasing in $n$. We use induction. When $n=0$, $v_0(i)=0$ and $D v_0(i)=0$.
 We have,
 \begin{equation*}
 v_{n+1}(i)=p\max\{v_n(i),r+v_n(i+1)\}+(1-p)v_n((i-1)^{+}).
 \end{equation*}
 \begin{equation*}
 v'_{n+1}(i)=\max\{v_n(i),r+v_n(i+1)\}.
 \end{equation*}
 Let $v''_{n+1}(i)=v_n((i-1)^{+})$.
Then $Dv'_1(i)=0$
and $D v_1(i)\le D v_0(i)$. \par
Now, assume that the claim holds for any $n$, i.e., $D v_{n+1}(i)\le D v_n(i)$. We need to prove that
$D v_{n+2}(i)\le D v_{n+1}(i)$. It is easy to see that $D v''_{n+2}(i)\le D v''_{n+1}(i)$. Therefore, to complete the proof, we need
to prove that $D v'_{n+2}(i)\le D v'_{n+1}(i)$.
Let $a_0,a_1\in \lbrace A_1,A_2 \rbrace$ be the maximizing actions in states $i$ and $i+1$, respectively, at $(n+2)^{\rm th}$ iteration. Let
$b_0,b_1\in \lbrace A_1,A_2 \rbrace$ be the maximizing actions in states $i$ and $i+1$, respectively, at $(n+1)^{\rm th}$ iteration.
Now, it is impossible to have $a_1=A_2$ and $b_0=A_1$. If $b_0=A_1$, we have, $D v_n(i)\le -r$.
From Lemma \ref{lemma1}, we must have $D v_n(i+1)\le -r$. If $a_1=A_2$, we have
$D v_{n+1}(i+1)\ge -r$.  This contradicts the
inductive assumption.
Therefore, we consider three cases as follows. For a given value of $a_1$ and $b_0$, if the inequality holds
for any values of $a_0$ and $b_1$, then the inequality will hold for maximizing actions as well.\\
1) $a_1=b_0=A_1,$ then choose $a_0=b_1=A_1$. We have,
\begin{equation*}
 \begin{split}
  &D v'_{n+2}(i)-D v'_{n+1}(i)=D v'_{n+1}(i)-D v'_{n}(i)\le 0.
 \end{split}
\end{equation*}
2) If $a_1=b_0=A_2,$ then we choose $a_0=b_1=A_2$,
and the inequality satisfies similar to the previous case.\\
3) If $a_1=A_1$ and $b_0=A_2$, then we choose $a_0=A_2$ and $b_1=A_1$. 
\begin{equation*}
\begin{split}
&D v'_{n+2}(i)-D v'_{n+1}(i)=v_{n+1}(i+1)-r-v_{n+1}(i+1)-v_{n}(i+1)\\&
+r+v_{n}(i+1)=0.
\end{split}
\end{equation*}


Thus, we have, $D v_{n+2}(i)\le D v_{n+1}(i)$. 

\section{Proof of Lemma \ref{sec:uni}}\label{appc}
We know that if the optimal action in state $i$ is $A_1$, then $V(i+1)-V(i)\le -r$.
Since VIA converges to the policy with threshold $T^*$, $\exists N_0>0$ such that $\forall n\ge N_0$,
$v_n(i+1)-v_n(i)\le -r$ $ \forall i \ge T^*$ and $v_n(i+1)-v_n(i)\ge -r$ $\forall i \le T^*$.
Let $t_n, n\ge 1$ be the optimal threshold at $n^{\rm {th}}$ iteration of VIA. Symbolically,
$t_n=\min\{i\in \mathbb{N}_0: v_n(i+1)-v_n(i)\le -r\}$. If for no values of $i$, the inequality holds,
then $t_n$ is taken as $N$. Using Lemma \ref{sec:dec}, $t_n$ must monotonically decrease with $n$ and
$\lim \limits_{n\to \infty} t_n= T^*$.\par
Consider a modified problem where $A_1$ is not permitted in any state $i<\hat{T}$, for a given
threshold $\hat{T} (T^*<\hat{T}\le N)$. Lemma \ref{sec:dec} holds for this modified problem too. Let $n_{\hat{T}}$ be the first VIA
iteration where the threshold drops to $\hat{T}$. The value function iterates for the modified and the original problem are same
for $n<n_{\hat{T}}$ because $A_1$ is never chosen as the optimal action for $i<\hat{T}$ in the original problem in these iterations. Therefore, $n_{\hat{T}}$ must be finite
and the following inequality holds for both original and the modified problem after $n_{\hat{T}}$ iterations.
\begin{equation}\label{eq:ineq}
 v_{n}(\hat{T}+1)-v_{n}(\hat{T})\le -r.
\end{equation}
Using Lemma \ref{sec:dec}, Equation (\ref{eq:ineq}) holds $\forall n \ge n_{\hat{T}}$.
Therefore, in the considered modified problem, $t_n$ converges to $\hat{T}$. This implies that the threshold policy
with threshold $\hat{T}$ is better than that of $\hat{T}+1$. Since $\hat{T}$ can be chosen arbitrarily, average reward is
monotonically decreasing with $\hat{T}$, $\forall \hat{T}>T^*$.\par
Now, if we have $\sigma(T)\ge \sigma(T+1)$, we must have $T\ge T^*$. Therefore, $\sigma(T+1)\ge \sigma(T+2)$.
Thus, $\sigma(T)$ is unimodal in $T$.\\\\
\textbf{Acknowledgements}
Work of VSB is supported in part by a J. C. Bose
Fellowship and CEFIPRA grant for ``Machine Learning for
Network Analytics''.
\bibliographystyle{ACM-Reference-Format}
\bibliography{valuetools}

\end{document}